\newtheorem{theorem}{Theorem}
\newtheorem{proposition}{Proposition}
\title{EMPEROR: Efficient Moment-Preserving Representation of Distributions}
\name{Xinran Liu$^{\star}$ \qquad Shansita D. Sharma$^{\star}$ \qquad Soheil Kolouri$^{\star \dagger}$}
\address{$^{\star}$ Department of Computer Science, Vanderbilt University\\
$^{\dagger}$Department of Electrical and Computer Engineering, Vanderbilt University}
\begin{document}
%
\maketitle
\begin{abstract}
We introduce EMPEROR (Efficient Moment-Preserving Representation of Distributions), a mathematically rigorous and computationally efficient framework for representing high-dimensional probability measures arising in neural network representations. Unlike heuristic global pooling operations, EMPEROR encodes a feature distribution through its statistical moments. Our approach leverages the theory of sliced moments: features are projected onto multiple directions, lightweight univariate Gaussian mixture models (GMMs) are fit to each projection, and the resulting slice parameters are aggregated into a compact descriptor. We establish determinacy guarantees via Carleman’s condition and the Cramér--Wold theorem, ensuring that the GMM is uniquely determined by its sliced moments, and we derive finite-sample error bounds that scale optimally with the number of slices and samples. Empirically, EMPEROR captures richer distributional information than common pooling schemes across various data modalities, while remaining computationally efficient and broadly applicable.
\end{abstract}

\begin{keywords}
moment-preserving distribution descriptors, moment determinacy, Cramér-Wold, efficient sliced pooling
\end{keywords}
\section{Introduction}
\label{sec:intro}

Modern AI systems routinely compress rich, high-dimensional sets of features/tokens into a single vector via permutation-invariant pooling or a special aggregation token. Popular choices such as global average pooling \cite{lin2013network} and CLS-style attention pooling \cite{dosovitskiyimage} are computationally attractive but collapse the underlying distribution of features without guarantees on what information is preserved. This heuristic reduction can hinder interpretability, robustness, and data efficiency, and has motivated alternatives that try to encode more distributional structure \cite{Zhang2020FSPool,mialon2021a,kolouri2021wasserstein,naderializadeh2021PSWE,kothapalli2024equivariant}. However, most existing approaches emphasize empirical performance over principled recoverability or quantifiable fidelity to the original feature distribution.

In this paper, we propose EMPEROR, an Efficient Moment-Preserving Representation of Distributions, that treats a layer’s features as samples from a finite positive measure and encodes that measure through its moments. The core idea is to replace ambiguous, high-dimensional moment estimation with \emph{sliced moments}: we project features onto multiple directions, fit lightweight \emph{univariate} Gaussian mixture models (GMMs) to each projection, and aggregate the resulting slice parameters into a compact descriptor. Theoretically, sliced moments determine the multivariate measure under mild conditions (via Carleman + Cramér–Wold), and specializing to GMMs yields explicit, stable moment formulas. Practically, univariate fits avoid the $O(d^2)$ burden of full covariances, are robust and scalable, and give closed-form moments that can be assembled degree-by-degree. We further analyze the conditioning of the slice design, showing that the reconstruction error of degree-$k$ moments decays as $L^{-1/2}$ with the number of slices (and $N^{-1/2}$ with samples), enabling a tunable accuracy–cost trade-off. EMPEROR thus provides a mathematically principled alternative to heuristic pooling.

Our contributions are fourfold:  
(i) \textbf{Theory.} We establish a sliced-moment determinacy result for finite measures and instantiate it for multivariate GMMs, ensuring identifiability from one-dimensional projections.  
(ii) \textbf{Algorithm.} We introduce a simple, parallelizable pipeline that fits $K$-component \emph{univariate} GMMs across $L$ slices and produces a fixed-size, moment-preserving descriptor without cross-slice coupling or $O(d^2)$ parameter growth.  
(iii) \textbf{Statistics.} We provide finite-sample error bounds for recovering degree-$k$ multivariate moments from noisy sliced estimates, with explicit $L^{-1/2}$ and $N^{-1/2}$ rates governed by the smallest eigenvalue of a slice design matrix.  
(iv) \textbf{Practice.} We demonstrate that EMPEROR captures distributional information more faithfully than common pooling schemes across diverse data modalities, while maintaining competitive efficiency.

\vspace{-.1in}
\section{Method}
\label{sec:method}
\vspace{-.1in}

Let $\rho \in \mathcal{M}_+(\mathbb{R}^d)$ denote a \emph{finite} positive Borel measure on $\mathbb{R}^d$.
In this work we primarily deal with empirical measures arising in neural network representations, but we keep the definitions general so the framework applies to arbitrary \emph{finite} positive measures. Our goal is to construct fixed-dimensional vector representations of $\rho$ that preserve its statistical moments. We begin with the necessary definitions, then address two fundamental questions: (i) which classes of distributions can be uniquely determined by their moments, and (ii) what is the minimal set of parameters required to represent these moments without redundancy?

\vspace{-.1in}
\subsection{The moment problem}
\label{append:mp}
\vspace{-.1in}

The classical moment problem \cite{stieltjes1894recherches,shohat1950problem,aheizer1965classical} asks whether a sequence of real numbers $(m_k)_{k=0}^\infty$ can be realized as the moments of a finite positive Borel measure on a certain space. The Hamburger moment problem \cite{hamburger1920erweiterung} specifically addresses the case where the underlying domain is the real line, i.e. it seeks $\rho\in\mathcal{M}_+(\mathbb{R})$ such that
\begin{equation}
m_k = \int_{\mathbb{R}} x^k \, d\rho(x), \qquad k \ge 0.
\end{equation}
A necessary and sufficient condition for existence is the positive semidefiniteness of all Hankel matrices formed from $(m_k)$,
\begin{equation}
H_n := \big(m_{i+j}\big)_{i,j=0}^n \succeq 0, \qquad n \in \mathbb{N},
\end{equation}
equivalently,
\begin{equation}
\sum_{i,j=0}^n c_i c_j\, m_{i+j} \ge 0 \quad \text{for all } (c_0,\ldots,c_n)\in\mathbb{R}^{n+1}.
\end{equation}
When existence holds, the representing measure need not be unique; the problem is \emph{determinate} if $\rho$ is uniquely determined by $(m_k)$, and \emph{indeterminate} otherwise. A classical sufficient condition for determinacy is \emph{Carleman’s condition} \cite{carleman1926fonctions}:
\begin{equation}
\sum_{k=1}^\infty m_{2k}^{-1/(2k)} = \infty \;\;\Longrightarrow\;\; \text{$\rho$ is unique.}
\end{equation}
For example, the univariate Gaussian $\mathcal{N}(\mu,\sigma^2)$ has moments
\begin{equation}
m_n = \sum_{k=0}^{\lfloor n/2 \rfloor} \binom{n}{2k} (2k - 1)!! \, \sigma^{2k} \mu^{\,n - 2k},
\end{equation}
and this sequence satisfies Carleman’s condition (proof follows Stirling’s estimate of the even moments); hence, the Gaussian is determinate in the Hamburger moment problem.

\medskip
\noindent\textbf{The multidimensional moment problem.}
Given $\{m_{\alpha}\}_{\alpha\in\mathbb{N}^d}$, with multi-indices $\alpha=(\alpha_1,\ldots,\alpha_d)$, we ask whether there exists a finite positive Borel measure $\rho$ on $\mathbb{R}^d$ such that
\begin{equation}
m_{\alpha} = \int_{\mathbb{R}^d} x^\alpha \, d\rho(x), 
\qquad x^\alpha := \prod_{i=1}^d x_i^{\alpha_i}.
\end{equation}
In contrast to the univariate case, the analogue of the Hankel matrix is a \emph{moment matrix} indexed by multi-indices, and positivity must hold for all real multivariate polynomials:
\begin{equation}
\sum_{\alpha,\beta} c_\alpha c_\beta\, m_{\alpha+\beta} \;\geq\; 0,
\end{equation}
for all finitely supported families $\{c_\alpha\}\subset\mathbb{R}$. This is tightly connected to sums of squares and real algebraic geometry. In particular, existence depends not only on positivity but also on support constraints defined by polynomial inequalities (semi-algebraic sets). Questions of uniqueness and determinacy in multiple dimensions are substantially subtler than in the univariate setting \cite{de2003determinate}. Next, we use \emph{slicing} to tame this problem.

\subsection{Cramér--Wold and slicing multivariate measures}
\vspace{-.1in}
We aim to characterize $\rho\in \mathcal{M}_+(\mathbb{R}^d)$ from the moments of its one–dimensional projections (slices). For $\theta\in\mathbb{S}^{d-1}$, define the pushforward $\rho_\theta := (\langle\cdot,\theta\rangle)_{\#}\rho \in \mathcal{M}_+(\mathbb{R})$ and set
\begin{equation}
m_k^\theta := \int_{\mathbb{R}} t^k \, d\rho_\theta(t)
            = \int_{\mathbb{R}^d} \langle x,\theta\rangle^k \, d\rho(x), \qquad k\in\mathbb{N}.
\end{equation}
The Cramér--Wold theorem \cite{cramer1936some} (extended via normalization to finite measures) implies: for finite positive Borel measures $\rho,\eta$ on $\mathbb{R}^d$, we have $\rho=\eta$ if and only if $\rho_\theta=\eta_\theta$ for all $\theta\in\mathbb{S}^{d-1}$.

\begin{theorem}[Sliced moment determinacy]
\label{thm:sliced-moment-determinacy}
Let $\rho\in\mathcal{M}_+(\mathbb{R}^d)$ have finite absolute moments of all orders,
$$M_n := \int_{\mathbb{R}^d}\|x\|^n\,d\rho(x) < \infty, \quad \forall n\in\mathbb{N}.$$
Assume that for every $\theta\in\mathbb{S}^{d-1}$ the univariate Hamburger moment problem for $\rho_\theta$ is \emph{determinate} (e.g., its even moments satisfy Carleman’s condition). Then $\rho$ is uniquely determined by the family of sliced moments $\{m_k^\theta : \theta\in\mathbb{S}^{d-1},\,k\in\mathbb{N}\}$.
\end{theorem}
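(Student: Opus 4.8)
The plan is to combine the Cramér--Wold theorem (already quoted, in its finite-measure form) with the hypothesis that each slice $\rho_\theta$ is determinate as a univariate Hamburger moment problem. The logic is a two-step reduction: the sliced moments $\{m_k^\theta\}_{k\in\mathbb{N}}$ pin down each pushforward $\rho_\theta$ as a measure on $\mathbb{R}$, and then the totality of pushforwards $\{\rho_\theta\}_{\theta\in\mathbb{S}^{d-1}}$ pins down $\rho$ on $\mathbb{R}^d$.

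In more detail, I would proceed as follows. First, suppose $\rho,\eta\in\mathcal{M}_+(\mathbb{R}^d)$ both have the same sliced moments: $m_k^\theta(\rho)=m_k^\theta(\eta)$ for all $\theta\in\mathbb{S}^{d-1}$ and all $k\in\mathbb{N}$; I must show $\rho=\eta$. Fix $\theta$. Step one is to check that $\rho_\theta$ and $\eta_\theta$ are genuinely finite measures on $\mathbb{R}$ with finite moments of all orders: this follows because $|\langle x,\theta\rangle|^k\le\|x\|^k$ by Cauchy--Schwarz, so $\int_{\mathbb{R}}|t|^k\,d\rho_\theta(t)=\int_{\mathbb{R}^d}|\langle x,\theta\rangle|^k\,d\rho(x)\le M_k<\infty$, and likewise for $\eta$; in particular the change-of-variables identity $m_k^\theta=\int_{\mathbb{R}^d}\langle x,\theta\rangle^k\,d\rho(x)$ used in the excerpt is justified, and the total masses agree via $k=0$. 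Step two: since $\rho_\theta$ and $\eta_\theta$ are finite positive measures on $\mathbb{R}$ with identical moment sequences, and that common sequence satisfies the determinacy hypothesis (Carleman on the even moments, say), the Hamburger determinacy statement quoted in the excerpt gives $\rho_\theta=\eta_\theta$. Since $\theta$ was arbitrary, $\rho_\theta=\eta_\theta$ for every $\theta\in\mathbb{S}^{d-1}$. Step three: apply the finite-measure Cramér--Wold statement recalled just before the theorem to conclude $\rho=\eta$.

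A small bookkeeping point worth addressing explicitly is the reduction of the standard (probability-measure) Cramér--Wold theorem to finite positive measures: if $\rho(\mathbb{R}^d)=0$ the claim is trivial, and otherwise one normalizes $\bar\rho:=\rho/\rho(\mathbb{R}^d)$, notes that $\bar\rho_\theta=\rho_\theta/\rho(\mathbb{R}^d)$ and that the common mass $\rho(\mathbb{R}^d)=\eta(\mathbb{R}^d)$ is already known to coincide (it is $m_0^\theta$ for any $\theta$), so equality of normalized pushforwards for all $\theta$ yields $\bar\rho=\bar\eta$ and hence $\rho=\eta$; the excerpt already states this extension, so I would simply cite it. One could also phrase Cramér--Wold through characteristic functions: $\widehat{\rho}(s\theta)=\widehat{\rho_\theta}(s)$ for $s\in\mathbb{R}$, and determinacy of $\rho_\theta$ fixes $\widehat{\rho_\theta}$, hence fixes $\widehat{\rho}$ on every ray through the origin, i.e. on all of $\mathbb{R}^d$, and the characteristic function determines a finite measure.

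Honestly, there is no deep obstacle here — the theorem is essentially a clean assembly of two classical facts, and the only things requiring care are the moment-finiteness estimates (Cauchy--Schwarz domination by $M_k$), the justification of the pushforward change-of-variables, and the normalization step that upgrades Cramér--Wold from probability measures to finite positive measures. The ``hard part,'' to the extent there is one, is purely expository: making sure the determinacy hypothesis is invoked at the level of the \emph{common} moment sequence of $\rho_\theta$ and $\eta_\theta$ (so that one does not need to know in advance which measure the moments come from), and being precise that the hypothesis is assumed for \emph{every} $\theta$, since a single bad direction would break the argument.
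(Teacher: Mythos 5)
Your argument is correct and follows essentially the same route as the paper's own (sketch) proof: fix two finite measures with equal sliced moments, use per-slice Hamburger determinacy to conclude equality of all one-dimensional pushforwards, then apply the finite-measure Cramér--Wold theorem after matching total masses via $m_0^\theta$. Your added details (Cauchy--Schwarz domination $|\langle x,\theta\rangle|^k\le\|x\|^k\Rightarrow$ finite slice moments, and the explicit normalization step) are exactly the bookkeeping the paper omits for space, so nothing is missing or different in substance.
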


\begin{proof}[Proof sketch]
If $\eta\in\mathcal{M}_+(\mathbb{R}^d)$ has the same sliced moments as $\rho$, then for each fixed $\theta$ the corresponding univariate moment sequences coincide, hence determinacy yields $\rho_\theta=\eta_\theta$. By Cramér--Wold (after normalizing masses, which coincide since $m_0^\theta(\rho)=m_0^\theta(\eta)$), we conclude $\rho=\eta$. The full proof is omitted due to space limitations. 
\end{proof}

\noindent\textbf{Link to multivariate moments.}
For $\alpha\in\mathbb{N}^d$, write $m_\alpha := \int_{\mathbb{R}^d} x^\alpha\,d\rho(x)$ with $x^\alpha:=\prod_{i=1}^d x_i^{\alpha_i}$. Then for each $k\in\mathbb{N}$,
\begin{equation}
\label{eq:slice-moment-expansion}
\begin{aligned}
m_k^\theta
&= \int_{\mathbb{R}^d} \langle x,\theta\rangle^k\,d\rho(x)
 = \sum_{|\alpha|=k} \binom{k}{\alpha}\,\theta^\alpha\, m_\alpha,\\
\binom{k}{\alpha}&:=\frac{k!}{\alpha_1!\cdots\alpha_d!},\qquad
\theta^\alpha:=\prod_{i=1}^d \theta_i^{\alpha_i}.
\end{aligned}
\end{equation}
Thus $m_k^\theta$ is a homogeneous polynomial of degree $k$ in $\theta$ whose coefficients are precisely the order-$k$ moments $\{m_\alpha:|\alpha|=k\}$. Knowing $\theta\mapsto m_k^\theta$ for all $\theta\in\mathbb{S}^{d-1}$ determines these coefficients uniquely (polynomial uniqueness), so the full multivariate moment sequence is recoverable degree-by-degree from sliced moments.

Capturing \emph{all} moments for arbitrary finite positive Borel measures is daunting: in $d$ dimensions, the number of monomials of total degree $\le K$ grows combinatorially as $\binom{d+K}{K}$; the associated moment matrices (of size $\binom{d+n}{n}$ at degree $n$) become large and often ill–conditioned; and mere existence already requires positivity of the Riesz functional on squares, i.e., $L(p^2)\ge 0$ for all polynomials $p$ (equivalently, positive semidefiniteness of all moment matrices). With support constraints, one further needs Positivstellensatz certificates (e.g., Putinar/Schmüdgen under Archimedean assumptions). Moreover, uniqueness is not guaranteed (moment–indeterminate laws exist), high–order moments are extremely sensitive to tail behavior and sampling noise, and finite truncations of the moment sequence need not identify the underlying measure.

To avoid these complications, we restrict our attention to an expressive yet tractable class of distributions, i.e., Gaussian mixture models (GMMs). 

\vspace{-.1in}
\subsection{The Special Case of Gaussian Mixture Models }
\label{sec:gmm}
\vspace{-.1in}

When $\rho$ is a multivariate Gaussian mixture, we have,
\vspace{-.15in}
\begin{equation}
\label{eq:gmm}
\rho \;=\; \sum_{j=1}^K \pi_j\, \mathcal{N}(\mu_j,\Sigma_j),
\vspace{-.2in}
\end{equation}
for $\pi_j>0,\ \sum_{j=1}^K \pi_j=1,\ \mu_j\in\mathbb{R}^d,\ \Sigma_j\in\mathbb{S}_{++}^d$.
This class is particularly attractive for two main reasons:
(i) finite mixtures of Gaussians are dense in the set of probability measures (weak topology) and can approximate smooth densities arbitrarily well in $L^p$ norms and even uniformly on compacts, given enough components \cite{nguyen2020approximation}, and (ii) \emph{all} raw moments of $\rho$ are explicit functions of the parameters
$\{\pi_j,\mu_j,\Sigma_j\}_{j=1}^K$ via Isserlis’/Wick’s theorem \cite{isserlis1918formula}. The multivariate
moment generating function of $X\sim \rho$, i.e., $M_X(t)=\mathbb{E}[e^{t^\top X}]$, is
\vspace{-.1in}
\begin{equation}
\label{eq:gmm-mgf}
M_X(t)
=\sum_{j=1}^K \pi_j \exp\!\Big(t^\top \mu_j+\tfrac12\, t^\top \Sigma_j t\Big),
~~t\in\mathbb{R}^d,
\vspace{-.1in}
\end{equation}
so each raw moment $m_\alpha=\partial_t^\alpha M_X(0)$ is a finite polynomial in
$\{\pi_j,\mu_j,\Sigma_j\}$. Moreover, any finite Gaussian mixture is \emph{moment-determinate}: its moment generating function
\eqref{eq:gmm-mgf} is finite for all $t\in\mathbb{R}^d$ (entire), hence the full moment sequence,
i.e., the Taylor coefficients at $t=0$, uniquely determines the distribution and thus the parameters
up to label swapping.

Importantly, vectorizing the parameters of a $K$-component, $d$-dimensional Gaussian mixture yields a high-dimensional representation: each component contributes $d$ mean entries and $d(d+1)/2$ covariance entries, plus one mixture weight, for a total of
$
K\!\left(d+\frac{d(d+1)}{2}+1\right)
$
parameters (or $K\!\left(d+\frac{d(d+1)}{2}\right)+(K-1)$ if the simplex constraint $\sum_j \pi_j=1$ is enforced). In high-dimensional settings, this parameterization quickly becomes prohibitively expensive for learning and inference, both computationally and statistically, due to its quadratic dependence $O(d^2)$ arising from the covariance parameters.

\vspace{-.1in}
\subsection{Sliced GMMs}
\vspace{-.1in}
For every direction $\theta\in\mathbb{S}^{d-1}$, the 1D pushforward is
\begin{equation}
\label{eq:sliced-gmm}
\rho_\theta \;=\; (\langle\cdot,\theta\rangle)_{\#}\rho
\;=\;\sum_{j=1}^K \pi_j\, \mathcal{N}\!\big(\theta^\top \mu_j,\ \theta^\top \Sigma_j \theta\big)
\;\in\; \mathcal{M}_+(\mathbb{R}).
\end{equation}
Hence each slice is itself a (univariate) GMM, and its $k$-th moment is
$m_k^\theta=\sum_{j=1}^K \pi_j\, \mathbb{E}_{Z\sim \mathcal{N}(\theta^\top \mu_j,\ \theta^\top \Sigma_j \theta)}[Z^k]$.

\begin{proposition}[Determinacy of sliced GMMs]
\label{prop:sliced-gmm-determinate}
Let $\rho$ be as in \eqref{eq:gmm}. Then, for every $\theta\in\mathbb{S}^{d-1}$, the univariate
moment sequence of $\rho_\theta$ satisfies Carleman’s condition and is determinate.
Consequently, by Theorem~\ref{thm:sliced-moment-determinacy}, $\rho$ is uniquely determined by the
family of sliced moments $\{m_k^\theta:\theta\in\mathbb{S}^{d-1},\,k\in\mathbb{N}\}$.
\end{proposition}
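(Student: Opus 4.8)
The plan is to reduce the multivariate statement to a one-dimensional even-moment growth estimate and then invoke the tools already in place. First I would observe that, by \eqref{eq:sliced-gmm}, for each fixed $\theta\in\mathbb{S}^{d-1}$ the pushforward $\rho_\theta$ is itself a univariate $K$-component Gaussian mixture, $\rho_\theta=\sum_{j=1}^{K}\pi_j\,\mathcal{N}(a_j,b_j^2)$ with $a_j:=\theta^\top\mu_j$ and $b_j^2:=\theta^\top\Sigma_j\theta$; note $b_j^2>0$ because $\Sigma_j\in\mathbb{S}_{++}^d$ and $\theta\neq0$. Since the $\pi_j$ form a convex combination, the $2k$-th moment factors as $m_{2k}^\theta=\sum_{j=1}^{K}\pi_j\,\mathbb{E}[Z_j^{2k}]\le\max_{1\le j\le K}\mathbb{E}[Z_j^{2k}]$ with $Z_j\sim\mathcal{N}(a_j,b_j^2)$, so it is enough to bound the even moments of a single Gaussian.

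Second, I would estimate the Gaussian even moments. Writing $Z_j=a_j+b_jG$ with $G\sim\mathcal{N}(0,1)$ and applying Minkowski's inequality in $L^{2k}$ together with the classical bound $\|G\|_{L^{2k}}=\big((2k-1)!!\big)^{1/(2k)}\le\sqrt{2k}$ (an elementary Stirling-type estimate), I get $\big(\mathbb{E}[Z_j^{2k}]\big)^{1/(2k)}\le|a_j|+b_j\|G\|_{L^{2k}}\le A+B\sqrt{2k}$, where $A:=\max_j|a_j|\le\max_j\|\mu_j\|$ and $B:=\max_j b_j\le\max_j\lambda_{\max}(\Sigma_j)^{1/2}$ (both finite, and in fact independent of $\theta$). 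Hence $m_{2k}^\theta\le\big(A+B\sqrt{2k}\big)^{2k}$, so $\big(m_{2k}^\theta\big)^{-1/(2k)}\ge\big(A+B\sqrt{2k}\big)^{-1}$, which is $\ge c\,k^{-1/2}$ for a constant $c>0$ once $k$ is large enough that $B\sqrt{2k}\ge A$. Since $\sum_{k\ge1}k^{-1/2}=\infty$ and discarding finitely many terms does not affect divergence, Carleman's condition $\sum_k (m_{2k}^\theta)^{-1/(2k)}=\infty$ holds, and therefore the univariate Hamburger moment problem for $\rho_\theta$ is determinate, for every $\theta$.

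Third, I would verify the standing hypothesis of Theorem~\ref{thm:sliced-moment-determinacy}, namely that $\rho$ has finite absolute moments of all orders. This is immediate from $M_n=\sum_{j=1}^{K}\pi_j\int_{\mathbb{R}^d}\|x\|^n\,d\mathcal{N}(\mu_j,\Sigma_j)(x)<\infty$, since every nondegenerate Gaussian has finite absolute moments of all orders and a finite convex combination preserves this. With determinacy of each slice $\rho_\theta$ established and the moment hypothesis checked, Theorem~\ref{thm:sliced-moment-determinacy} applies directly and yields that $\rho$ is uniquely determined by $\{m_k^\theta:\theta\in\mathbb{S}^{d-1},\,k\in\mathbb{N}\}$, as claimed.

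I do not anticipate a genuine obstacle here: the only quantitative ingredient is the $O(\sqrt{k})$ growth of the $L^{2k}$-norm of a standard Gaussian, which is classical, and everything else is bookkeeping (convexity of the mixture, Minkowski's inequality, and divergence of $\sum k^{-1/2}$). The one point worth stating carefully --- and which the bounds above already handle --- is that the constants $A,B$, and hence the Carleman estimate, can be taken uniform over $\theta\in\mathbb{S}^{d-1}$; this uniform version is what one later leans on when passing to finite-sample estimates of the sliced moments.
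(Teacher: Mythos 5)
Your argument is correct and follows essentially the same route as the paper's proof idea: bound the even sliced moments so that $(m_{2k}^\theta)^{1/(2k)}\le C_1+C_2\sqrt{k}$, deduce Carleman's condition as in the Gaussian case, and then invoke Theorem~\ref{thm:sliced-moment-determinacy}. Your write-up simply supplies details the paper leaves implicit (the Minkowski-inequality bookkeeping, uniformity of the constants in $\theta$, and the verification that $\rho$ has finite absolute moments of all orders), so it is a valid filling-in of the same proof.
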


\begin{proof}[Proof idea]
Each component in \eqref{eq:sliced-gmm} is Gaussian with variance $\theta^\top\Sigma_j\theta>0$,
hence its even moments grow like $(2n-1)!!\,(\theta^\top\Sigma_j\theta)^n$ up to lower-order mean
terms. Summing over finitely many components yields
$(m_{2n}^\theta)^{1/(2n)}\le C_1 + C_2\sqrt{n}$ uniformly in $n$, so
$\sum_n (m_{2n}^\theta)^{-1/(2n)}=\infty$ (as in the Gaussian proof), implying Carleman’s condition
for $\rho_\theta$. Determinacy then follows for each slice, and Theorem~\ref{thm:sliced-moment-determinacy}
applies.
\vspace{-.1in}
\end{proof}

\subsection{Algorithmic \& Statistical Aspects of EMPEROR} 
\vspace{-.1in}
Let $\{x_i\}_{i=1}^N$ be i.i.d.\ samples from an unknown $K$-component Gaussian mixture
$\rho=\sum_{j=1}^K \pi_j\,\mathcal{N}(\mu_j,\Sigma_j)\in\mathcal{P}(\mathbb{R}^d)$. Directly estimating $\{(\pi_j,\mu_j,\Sigma_j)\}_{j=1}^K$ in $\mathbb{R}^d$ can be fragile in high dimensions (curse of dimensionality).  
We therefore fix $L$ directions $\Theta:=\{\theta_\ell\in\mathbb{S}^{d-1}\}_{\ell=1}^L$ and work with the one–dimensional pushforwards
$\rho_{\theta_\ell}:=(\langle\cdot,\theta_\ell\rangle)_{\#}\rho$. We first emphasize that our \emph{goal is not} to reconstruct the ambient parameters; rather, we seek a finite, moment-preserving \emph{descriptor} built from the parameters of the $L$ univariate GMMs. To that end, for each $\ell$ we estimate a $K$-component univariate GMM from the projected samples
$y_i^{(\ell)}:=\theta_\ell^\top x_i$, obtaining
\vspace{-.1in}
\[
\widehat{\mathcal{P}}^{(\ell)}=\big\{(\widehat{\pi}^{(\ell)}_k,\widehat{\mu}^{(\ell)}_k,\widehat{\sigma}^{(\ell)}_k)\big\}_{k=1}^K,
\vspace{-.1in}
\]
and define the sliced descriptor
$\widehat{\mathcal{S}}(\rho;\Theta):=\big\{\widehat{\mathcal{P}}^{(\ell)}\big\}_{\ell=1}^L$.
Within each slice, we fix labels by sorting components with respect to their  $\widehat{\mu}^{(\ell)}_k$ to remove intra-slice label ambiguity.

In the population model the mixture weights are \emph{slice-invariant}: $\pi_j^{\theta}=\pi_j$ for all $\theta$ (up to a permutation of components).
Enforcing this invariance during estimation induces a \emph{coupled} likelihood across slices and complicates the GMM approximation (e.g, via the expectation maximization algorithm). Because our objective is a compact representation rather than ambient parameter recovery, we \emph{do not} couple the slices and
treat $\{\widehat{\pi}^{(\ell)}_k\}$ as slice-specific parameters. This avoids cross-slice constraints while still yielding a strong, moment-rich descriptor.

To justify per-slice $K$-component fits, note that if two components $j\neq j'$ satisfy
$
\theta^\top\mu_j=\theta^\top\mu_{j'} \quad\text{and}\quad
\theta^\top\Sigma_j\theta=\theta^\top\Sigma_{j'}\theta
$,
then they \emph{collide} in the slice $\theta$. For fixed $(\mu_j,\Sigma_j)\neq(\mu_{j'},\Sigma_{j'})$, the set of $\theta\in\mathbb{S}^{d-1}$ solving these two polynomial equations is a proper (lower-dimensional) algebraic subset; hence it has surface measure zero. Consequently, for almost every $\theta$ the projected mixture has $K$ distinct components and is identifiable in 1-D. In practice, drawing several (e.g., random) directions makes collisions vanishingly unlikely.

\noindent\textbf{}{Statistical remark.} Under standard regularity assumptions for finite mixtures (identifiability and well-specified $K$), the per-slice MLEs are consistent:
$$\widehat{\mathcal{P}}^{(\ell)}\xrightarrow{p}\mathcal{P}^{(\ell)}=\{(\pi_{\sigma_\ell(k)},\theta_\ell^\top\mu_{\sigma_\ell(k)},\sqrt{\theta_\ell^\top\Sigma_{\sigma_\ell(k)}\theta_\ell})\}_{k=1}^K$$
for some permutation $\sigma_\ell$. Thus the descriptor
$\widehat{\mathcal{S}}(\rho;\Theta)$ converges (up to per-slice label swaps) to the collection of true sliced parameters, providing a finite, robust summary of $\rho$ via one–dimensional GMMs.

\noindent\textbf{Statistical error bound of EMPEROR.}
Fix a degree $k\in\mathbb{N}$ and let $M_k:=\binom{d+k-1}{k}$ be the number of monomials of total
degree $k$. Stack the multivariate moments into $m^{(k)}\in\mathbb{R}^{M_k}$, ordered by
multi–indices $\{\alpha:\ |\alpha|=k\}$, and define for directions
$\Theta=\{\theta_\ell\}_{\ell=1}^L\subset\mathbb{S}^{d-1}$ the design matrix $\Phi_k\in\mathbb{R}^{L\times M_k}$ by
\vspace{-.15in}
\[
(\Phi_k)_{\ell,\alpha}\;:=\;\binom{k}{\alpha}\,\theta_\ell^\alpha,
\qquad
y^{(k)}_\ell\;:=\;m_k^{\theta_\ell}=\sum_{|\alpha|=k}(\Phi_k)_{\ell,\alpha}\,m_\alpha,
\vspace{-.1in}
\]
so that $y^{(k)}=\Phi_k m^{(k)}$. In practice we observe
$\widehat{y}^{(k)}=y^{(k)}+\varepsilon^{(k)}$, where the errors
$\varepsilon^{(k)}=(\varepsilon^{(k)}_1,\dots,\varepsilon^{(k)}_L)^\top$ model per–slice estimation
noise (e.g., from fitting univariate GMMs). Assume
$
\mathbb{E}\big[\varepsilon^{(k)}\big]=0,\qquad
\mathrm{Cov}\!\big(\varepsilon^{(k)}\big)=\frac{\tau_k^2}{N}\,I_L
$
for some proxy variance $\tau_k^2$ (depending on $k$ and the underlying distribution) and sample
size $N$. The least–squares estimator
\[
\widehat{m}^{(k)}:=\arg\min_{u\in\mathbb{R}^{M_k}}\|\Phi_k u-\widehat{y}^{(k)}\|_2^2
\;=\;(\Phi_k^\top\Phi_k)^{-1}\Phi_k^\top \widehat{y}^{(k)}.
\]
For $\theta_\ell\overset{\text{i.i.d.}}{\sim}\mathrm{Unif}(\mathbb{S}^{d-1})$ and $L\ge M_k$,
$\mathrm{rank}(\Phi_k)=M_k$ holds almost surely, so the LS estimator is well-defined, satisfying
\begin{equation}\label{eq:stat-bound-exact}
\mathbb{E}\,\big\|\widehat{m}^{(k)}-m^{(k)}\big\|_2^2
=\frac{\tau_k^2}{N}\,\mathrm{Tr}\!\big((\Phi_k^\top\Phi_k)^{-1}\big)
\;\le\;\frac{\tau_k^2}{N}\,\frac{M_k}{\sigma_{\min}(\Phi_k)^2},
\end{equation}
where $\sigma_{\min}(\Phi_k)$ is the smallest singular value of $\Phi_k$. Moreover, by the law of large numbers for random features we have, 
$
\frac{1}{L}\,\Phi_k^\top\Phi_k \;\xrightarrow[L\to\infty]{\ \ \ \ }\;
\Sigma_k:=\mathbb{E}\!\big[\varphi_k(\theta)\varphi_k(\theta)^\top\big]$,
where $\varphi_k(\theta):=\big(\binom{k}{\alpha}\theta^\alpha\big)_{|\alpha|=k}$ and $\Sigma_k$ is positive definite (hence $\lambda_{\min}(\Sigma_k)>0$) for every fixed $(d,k)$.
With high probability for large $L$,
$\sigma_{\min}(\Phi_k)\gtrsim \sqrt{L}\,\sqrt{\lambda_{\min}(\Sigma_k)}$, so \eqref{eq:stat-bound-exact}
yields,
\vspace{-.1in}
\begin{equation}\label{eq:stat-rate}
\mathbb{E}\,\big\|\widehat{m}^{(k)}-m^{(k)}\big\|_2
\;\lesssim\; \frac{\tau_k}{\sqrt{N}}\,
\sqrt{\frac{M_k}{L}}\;\frac{1}{\sqrt{\lambda_{\min}(\Sigma_k)}},
\vspace{-.1in}
\end{equation}
i.e., for fixed degree $k$ and sample size $N$, the (root-mean-square) error decays as
$L^{-1/2}$ with the number of slices. Summing \eqref{eq:stat-rate} over $k\le K$ gives the same
$L^{-1/2}$ and $N^{-1/2}$ scaling (up to $\sum_{k=0}^{K} M_k=\binom{d+K}{K}$). In practice, ridge
regularization replaces $(\Phi_k^\top\Phi_k)^{-1}$ by $(\Phi_k^\top\Phi_k+\lambda I)^{-1}$ producing a bias-variance trade-off with the decay rate of $L^{-1/2}$.

\vspace{-.1in}
\section{Experiments}
\label{sec:exp}
\vspace{-.1in}

To evaluate EMPEROR, we conduct two tasks. (1) Point cloud classification on Point Cloud MNIST \cite{pointcloudmnist2d2025} and ModelNet40 \cite{wu20153d}, using distribution descriptors in both the ambient space (2D and 3D) and the embedding space of a pretrained point cloud Transformer \cite{guo2021pct} (256 dimensions). (2) Image representation analysis with a pretrained Vision Transformer \cite{dosovitskiyimage} (ImageNet), where we classify samples from the ClipArt and Painting domains of DomainNet \cite{peng2019moment} using descriptors extracted at different ViT layers (3-12).

We perform classification tasks using the representations/descriptors extracted by EMPEROR along with other baselines, including: Global Average Pool (GAP) \cite{lin2013network}, Generalized Max-Pooling (GMP) \cite{murray2014generalized}, Generalized Mean Pooling (GeM) \cite{radenovic2018fine}, Covariance Pooling \cite{acharya2018covariance}, Featurewise Sort Pooling (FSPool) \cite{Zhang2020FSPool}, Wasserstein Embedding (WE) \cite{kolouri2021wasserstein}. Importantly, the latter two baselines are designed to capture higher moments of the features' distribution. 

\vspace{-.1in}
\subsection{Point Cloud Classification}
\vspace{-.1in}
 We evaluate point cloud classification using EMPEROR and baseline descriptors. Table \ref{tab:pc} reports results averaged over three runs, showing that EMPEROR yields strong performance even without a backbone (ambient space).
\begin{table}[H]
\vspace{-.1in}
\begin{tabular}{c|cc|cc}
method&\multicolumn{2}{c|}{Point Cloud Mnist 2D}& \multicolumn{2}{c}{ModelNet40}\\
\hline
    & \scriptsize Identity & \scriptsize PCT &\scriptsize Identity & \scriptsize PCT \\
    \cline{2-5}
    GAP& 0.2581 &0.9700 &0.0405 &0.7216\\
    GMP &  0.4244 &0.6373 &0.3254 &0.7277\\
    GeM&0.2881&0.9092&0.2597&0.7561 \\
    Cov &0.4180 &0.9710&0.2549&0.8071 \\
    FSpool &0.2788&0.9252 &0.3051&0.7387 \\
    WE &0.9478 &0.9712 &0.8448 &0.8489 \\
    \hdashline
    \rule{0pt}{1.05\normalbaselineskip}
    EMPEROR&\textbf{0.9643} &\textbf{0.9717}&\textbf{0.8517}&\textbf{0.8674} \\
\hline
\end{tabular}
\vspace{-.1in}
\caption{Results of different distribution descriptors on PC MNIST (2D), and ModelNet40 (3D) datasets, with identity backbone as well as a PC Transformer (PCT) backbone.}
\label{tab:pc}
\vspace{-0.2in}
\end{table}
\vspace{-0.1in}
\subsection{Image Classification}
\vspace{-0.1in}
For image experiments, we pass Clipart and Painting images from DomainNet through a pretrained ViT, extract token representations at each layer, and apply the distribution descriptors before a linear classifier. The results are shown in Figure \ref{fig:fig1}. We see that EMPEROR achieves competitive performance (even with the CLS token) and yields more robust representations across all layers.
\vspace{-0.17in}
\begin{figure}[H]
    \centering
    \includegraphics[width=\linewidth,trim={0 0 0 150},clip]{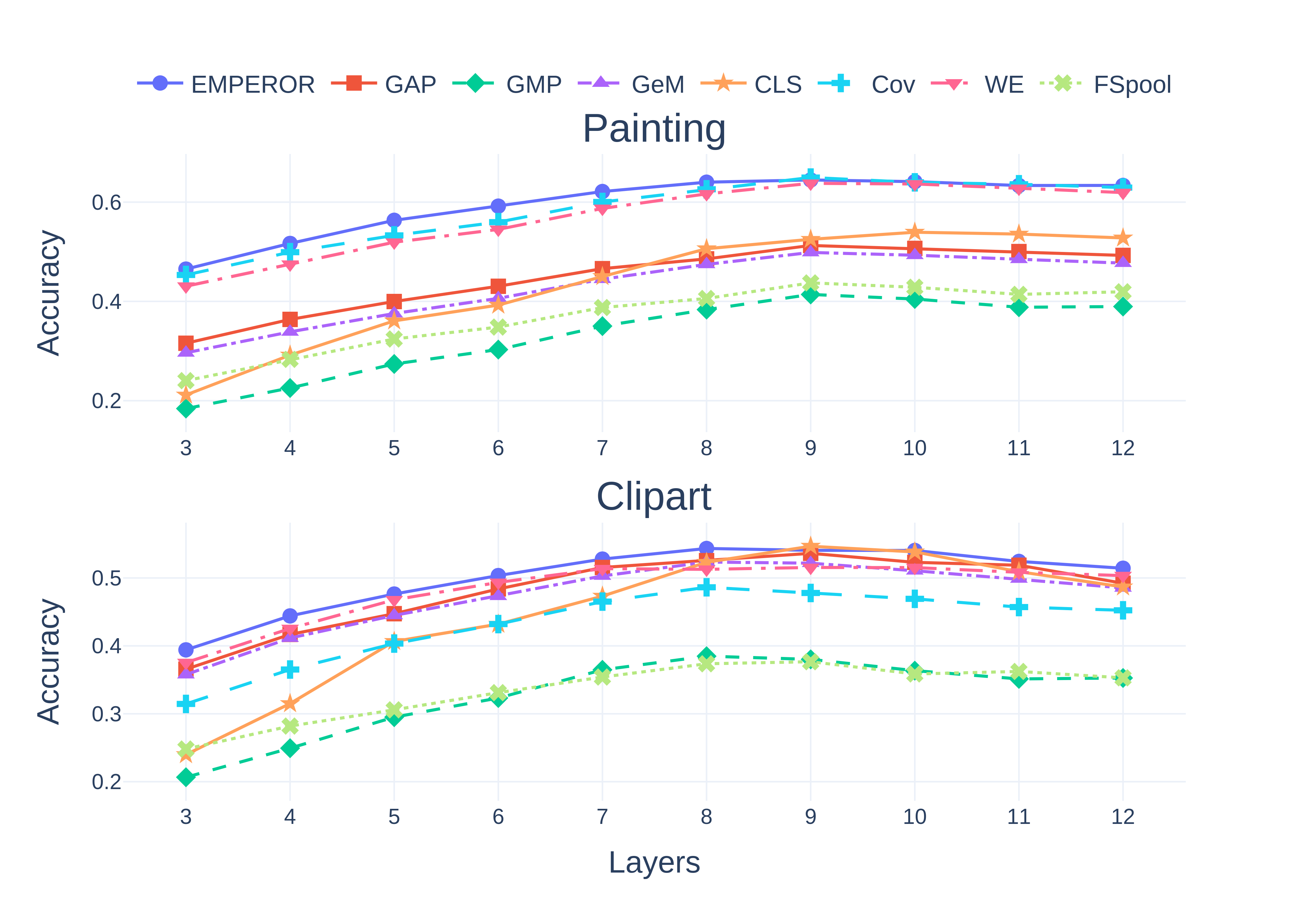}
    \vspace{-0.4in}
    \caption{Image classification results across different layers on the Painting (top) and Clipart (bottom) datasets.}
    \label{fig:fig1}
\end{figure}
\newpage
\clearpage
\bibliographystyle{IEEEbib}
\bibliography{strings,refs}

\end{document}